\documentclass[letterpaper]{article} 
\usepackage{aaai25}  
\usepackage{times}  
\usepackage{helvet}  
\usepackage{courier}  
\usepackage[hyphens]{url}  
\usepackage{graphicx} 
\urlstyle{rm} 
\usepackage{natbib}  
\usepackage{caption} 
\frenchspacing  
\setlength{\pdfpagewidth}{8.5in}  
\setlength{\pdfpageheight}{11in}  
%
\usepackage{algorithm}
\usepackage{algpseudocode}
\usepackage{amsmath,amsfonts,bm,amssymb}
\usepackage{subcaption}
\usepackage{xspace}
\usepackage{amsthm}
\usepackage{makecell}

\DeclareMathOperator*{\argmax}{argmax}
\DeclareMathOperator*{\argmin}{argmin}
\newtheorem{theorem}{Theorem}
\newtheorem{lemma}{Lemma}
\newtheorem{assumption}{Assumption}
\newtheorem{proposition}{Proposition}
\newtheorem{definition}{Definition}
\newcommand{\psbo}{{BOIDS}\xspace}
%
\usepackage{newfloat}
\usepackage{listings}
\DeclareCaptionStyle{ruled}{labelfont=normalfont,labelsep=colon,strut=off} 
\lstset{%
	basicstyle={\footnotesize\ttfamily},
	numbers=left,numberstyle=\footnotesize,xleftmargin=2em,
	aboveskip=0pt,belowskip=0pt,%
	showstringspaces=false,tabsize=2,breaklines=true}
\floatstyle{ruled}
\newfloat{listing}{tb}{lst}{}
\floatname{listing}{Listing}
%
\pdfinfo{
/TemplateVersion (2025.1)
}

\setcounter{secnumdepth}{2} 

%


\title{\psbo: High-Dimensional Bayesian Optimization via \\ Incumbent-Guided Direction Lines and Subspace Embeddings}
\author{
    Lam Ngo\textsuperscript{\rm 1},
    Huong Ha\textsuperscript{\rm 1},
    Jeffrey Chan\textsuperscript{\rm 1},
    Hongyu Zhang\textsuperscript{\rm 2}
}
\affiliations{
    \textsuperscript{\rm 1}The Royal Melbourne Institute of Technology (RMIT), Australia\\
    \textsuperscript{\rm 2}Chongqing University, China\\


    s3962378@student.rmit.edu.au, 
    huong.ha@rmit.edu.au,
    jeffrey.chan@rmit.edu.au,
    hyzhang@cqu.edu.cn
%
}



\usepackage{bibentry}

\begin{document}

\maketitle

\begin{abstract}
When it comes to expensive black-box optimization problems, Bayesian Optimization (BO) is a well-known and powerful solution. Many real-world applications involve a large number of dimensions, hence scaling BO to high dimension is of much interest. However, state-of-the-art high-dimensional BO methods still suffer from the curse of dimensionality, highlighting the need for further improvements. In this work, we introduce \psbo, a novel high-dimensional BO algorithm that guides optimization by a sequence of one-dimensional direction lines using a novel tailored line-based optimization procedure. To improve the efficiency, we also propose an adaptive selection technique to identify most optimal lines for each round of line-based optimization. Additionally, we incorporate a subspace embedding technique for better scaling to high-dimensional spaces. We further provide theoretical analysis of our proposed method to analyze its convergence property. Our extensive experimental results show that \psbo outperforms state-of-the-art baselines on various synthetic and real-world benchmark problems.
\end{abstract}

%
\begin{links}
    \link{Code}{https://github.com/LamNgo1/boids}
\end{links}

\section{Introduction}

Bayesian Optimization (BO) \cite{shahriari2015taking, garnett_bayesoptbook_2023} is a powerful optimization method, capable of solving expensive black-box objective functions. BO has applications in many fields, notably hyperparameter tuning for machine learning models \cite{ turner2021bayesian, Wang2024BO_HPO}, neural architecture search \cite{ru2021interpretableNAS}, reinforcement learning \cite{parker2022autoRL}, engineering \cite{shields2021BO_chemical, ament2023BO_engineer}, robotics \cite{mayr2022BO_robotics}.

BO is an iterative method that can optimize expensive, black-box objective functions in a sample-efficient manner. In each iteration, BO constructs a surrogate model from observed data to approximate the objective function. To propose the next data points for observation, BO optimizes an acquisition function derived from the surrogate model and an optimization policy. The acquisition function quantifies the potential information gain from new observations, guiding the search towards solutions that maximize this gain. Common acquisition functions are computationally inexpensive to optimize and may even have tractable solutions under certain conditions. As a result, BO problems are transformed into less costly optimization tasks. 

As recent applications involve optimizing problems with hundreds to thousands of dimensions, scaling BO to such high dimensions has become a prevalent research topic \cite{binois2022BOsurvey, wang2023BOsurvey}. However, BO's performance significantly degrades as the dimension increases, partly due to the over-exploration of the acquisition function \cite{Eriksson2019TuRBO, ngo2024cmabo}. Classical approaches for optimizing BO acquisition functions, such as gradient solvers or evolutionary algorithms, operate over the entire search space, which can suffer from the curse of dimensionality. In high dimensions, the exponentially large search space makes optimizing the acquisition function computationally expensive (if the entire space is considered) or less accurate (if approximation techniques are used). Among existing solutions, LineBO \cite{kirschner2019linebo} introduces line-based optimization, which restricts the acquisition function optimization to a one-dimensional line (\textit{guiding line}), enhancing efficiency. This approach from LineBO can be interpreted as moving the data points following predefined lines as guiding directions. 
Despite its strong theoretical foundation, LineBO is not empirically competitive against recent high-dimensional BO baselines \cite{Letham2020Alebo, bardou2024dumbo}.

In this work, we propose a novel high-dimensional BO method inspired by the line-based optimization approach, where the optimization process is guided by our proposed novel guiding lines learnt from observed data. Specifically, inspired by Particle Swarm Optimization (PSO) algorithm \cite{kennedy1995pso}, we propose to use incumbents - the best solutions found so far - to determine the directions of these guiding lines. We make use of two types of incumbents: (1) the global incumbent across all data points and (2) the personal incumbent of each data point throughout its history. By using these incumbents, we encourage exploitation in regions closer to the incumbents, mitigating the over-explorative behavior in high dimensions. We term our proposed guiding lines as \textit{incumbent-guided} lines. We then introduce a new line-based optimization process tailored for these incumbent-guided lines, using a novel multi-objective acquisition function.
Furthermore, to increase the sample efficiency, we propose to select the best line in each iteration using a multi-armed bandit technique. Finally, we also propose to incorporate a subspace embedding technique \cite{papenmeier2022baxus} to further improve the overall performance. We name our method \textit{High-dimensional \underline{B}ayesian \underline{O}ptimization via \underline{I}ncumbent-guided \underline{D}irection Lines and \underline{S}ubspace Embeddings} (\psbo). To provide deeper insight into the novel incumbent-guided lines, we provide theoretical analysis on the convergence property and derive simple regret bound for \psbo. Our experimental results demonstrate that \psbo outperforms state-of-the-art high-dimensional BO methods on a collection of synthetic and real-world benchmark problems.
We summarize our contributions as follows.
\begin{itemize}
    \item We propose \psbo, a novel high-dimensional BO algorithm that incorporates incumbent-guided line directions into a tailored line-based optimization process, employs a multi-armed bandit technique to select the optimal directions, and uses subspace embedding technique to boost the overall performance.
    \item We provide theoretical analysis on the convergence property of our proposed method.
    \item We demonstrate that \psbo empirically outperforms state-of-the-art baselines on a comprehensive set of synthetic and real-world benchmark problems.
\end{itemize}

\section{Background}
\subsection{Bayesian Optimization}
Without loss of generality, let us consider the minimization problem: given an \textit{expensive black-box} objective function $f: \mathcal{X} \rightarrow \mathbb{R}$ where $\mathcal{X} \subset \mathbb{R}^d$ is the search space, the goal is to find the global optimum $\mathbf{x}^*$ of the objective function $f$,
\begin{equation} \label{eq:problem}
    \mathbf{x}^* \in \arg{\min_{\mathbf{x}\in\mathcal{X}}{f(\mathbf{x})}},
\end{equation}
using the least number of function evaluations.

BO iteratively proposes potential data point $\mathbf{x}$ and obtain noisy function evaluation $y=f(\mathbf{x}) + \varepsilon$, where $\varepsilon \sim \mathcal{N}(0,\sigma^2)$ represents observation noise. In each iteration, BO constructs a surrogate model from the observed dataset $\mathcal{D}=\{\mathbf{x}^{(i)}, y^{(i)}\}|_{i=1}^{t}$ up to iteration $t$, then constructs an acquisition function to decide next data point for observation. 

The surrogate model for BO is a statistical model that approximates the objective function. The most common type of BO surrogate model is Gaussian Process (GP), which is characterized by a mean function for the prior belief and a kernel that captures the behavior of the objective function, such as smoothness \cite{williams2006GP}. Other common types of BO surrogate model include TPE \cite{bergstra2011TPE}, Random Forest \cite{Frank2011SMAC}, neural networks \cite{muller2023pfns4bo}. 

The acquisition function $\alpha: \mathcal{X} \rightarrow \mathbb{R}$ quantifies the potential information gain at individual point $\mathbf{x}$ across the search space. By maximizing the acquisition function $\mathbf{x}^{(t+1)}\in \argmax_{\mathbf{x}\in\mathcal{X}}{\alpha(\mathbf{x})}$, which maximizes the information gain, the next data point for observation is selected.
Common choices of acquisition functions include 
Expected Improvement \cite{Mockus1978EI}, Upper Confidence Bound \cite{srinivas2009gpucb}, Thompson Sampling \cite{Thompson1933TS}. Despite their different motivations and properties, acquisition functions must strike a balance between exploitation - sampling in regions where the function values are expected to be optimal, and exploration - sampling in regions where there is significant uncertainty about the objective function.

\subsection{Line-based Bayesian Optimization}
LineBO \cite{kirschner2019linebo} addresses high-dimensional BO problems by performing BO on a series of one-dimensional lines (\textit{guiding lines}). In each iteration $t$, LineBO defines a guiding line $\mathcal{L}=\mathcal{L}(\mathbf{\hat{x}}, \mathbf{v}) \in \mathcal{X}$ that passes through the current observed data point $\mathbf{\hat{x}} = \mathbf{x}^{(t)}$ and follows a uniformly random direction $\mathbf{v}$, i.e., $\mathbf{v}$ is uniformly sampled from a $d$-dimensional unit sphere. Then LineBO finds the next data point for evaluation by maximizing the acquisition function over the guiding line: $\mathbf{x}^{(t+1)} = \argmax_{\mathbf{x}\in\mathcal{L}}{\alpha(\mathbf{x})}$. This line-based optimization helps optimize the acquisition function more efficiently, especially in high dimensions, where the search space $\mathcal{X}$ is exponentially large. Despite having strong theoretical property, the empirical performance of LineBO is not competitive with recent state-of-the-art high-dimensional BO methods. We hypothesize that this is due to (1) the uniformly random construction of $\mathbf{v}$ and (2) the restriction of acquisition optimization on 1D lines which significantly reduces the number of potential data points. 
Inspired by theoretical and empirical foundations of LineBO, in this paper, we propose a novel algorithm for line-based optimization that guides optimization by direction $\mathbf{v}$ learnt from observed data.

\subsection{Subspace Embedding} \label{sec-background:subspace_embedding}
Subspace embedding methods address the high-dimensional optimization problems by leveraging latent subspaces for optimization. In this work, we focus on subspace embedding via linear random transformation \cite{wang2016rembo, Nayebi2019hesbo, Letham2020Alebo}. The main idea is to assume the existence of a low-dimensional subspace (active subspace) $\mathcal{Z}$ with dimension $d_e \leq d$, a function $g:\mathcal{Z} \rightarrow \mathbb{R}$ and a projection matrix $\mathbf{T}:\mathcal{X} \rightarrow \mathcal{Z}$ such that $\forall \mathbf{x}\in \mathcal{X}, \ g(\mathbf{Tx})=f(\mathbf{x})$. Then subspace embedding reduces the high-dimensional optimization problem to a lower-dimension subspace, effectively mitigate the curse of dimensionality issue. As the effective dimension $d_e$ is unknown in practice, subspace embedding methods instead operate in a low-dimensional target subspace $\mathcal{A}$ with a predefined target dimension $d_\mathcal{A}$. In subspace embedding BO methods, each iteration proposes a data point $\mathbf{x}_\mathcal{A} \in \mathcal{A}$, then uses a projection matrix $\mathbf{S}: \mathcal{A} \rightarrow \mathcal{X}$ for function evaluation. The projection matrix is defined as a sparse matrix $\mathbf{S}\in\{0, \pm 1\}^{d_\mathcal{A} \times d}$ \cite{Nayebi2019hesbo}, therefore, the function evaluation is $f(\mathbf{S}^\intercal \mathbf{x}_\mathcal{A})$.

\section{Related Work}
Various prior research have addressed the high-dimensional BO problems and can be categorized as follows. 

\paragraph{Effective Surrogate Models.} This approach replaces the common BO surrogate model, Gaussian Process \cite{williams2006GP}, with other models that scale better to high-dimensional spaces, including Tree-Parzen Estimator \cite{bergstra2011TPE}, Random Forest \cite{Frank2011SMAC} and neural networks \cite{snoek2015GP_DNN, springenberg2016GP_BNN}. Recently, Prior-data Fitted Networks (PFNs) \cite{muller2021pfns} was developed to mimic GP and Bayesian neural network to approximate the posterior predictive distribution for BO \cite{muller2023pfns4bo}.

\paragraph{Subspace Embedding.} This approach transforms the high-dimensional optimization problem into a low-dimensional problem on a target subspace. LineBO \cite{kirschner2019linebo} operates on one-dimensional target subspaces. HESBO \cite{Nayebi2019hesbo} and ALEBO \cite{Letham2020Alebo} can handle target subspaces with higher dimensions, yet the target subspaces' dimensions need to be predefined as a hyperparameter, which may lead to suboptimal performance. To overcome this, BAxUS \cite{papenmeier2022baxus} adaptively increases the target subspace's dimension, which has been shown to be more effective. 

\paragraph{Variable Selection.} This approach learns a subset of variables to perform low-dimensional BO. DropOut \cite{li2018dropout} is a classical approach that randomly selects the variables. SAASBO \cite{eriksson21saasbo} learns the most relevant low-dimensional subspace using Sparse Axis-Aligned Subspace Prior for GP modelling.

\paragraph{Search Space Partitioning.} This approach partitions the search spaces to smaller local regions, to enhance the surrogate modelling and encourage exploitation of potential regions that might contain the global optimum. TuRBO \cite{Eriksson2019TuRBO} defines the local regions as hyper-rectangles, updating their sizes based on the success or failure of the optimization process. As TuRBO's local region are heuristic, other works attempt to learn the local regions from data. LAMCTS \cite{Wang2020LAMCTS} learns potential local regions with non-linear boundary, improving the performance at the cost of being more computational expensive. CMA-BO \cite{ngo2024cmabo} leverages the Covariance Matrix Adaptation strategy to learn hyper-ellipsoid local regions, which are updated by inexpensive computation. 

\paragraph{Objective Function Decomposition.} This approach assumes an additive structure of the objective functions and constructs multiple low-dimensional GPs to approximate the high-dimensional objective functions \cite{kandasamy2015AddGPUCB, hoang2018DECHBO, han2021treeBO}. Recently, RDUCB \cite{ziomek2023rducb} suggests that random decomposition without learning from data can result in competitive performance. DumBO \cite{bardou2024dumbo} relaxes the assumption on the maximum dimension of each decomposition, which was required to be fixed and low in previous works.

\paragraph{Evolutionary Algorithms (EA).} EA is a family of black-box optimization algorithms known for their empirical success in high-dimensional optimization problems. CMA-ES \cite{Hansen2001CMAES} is a well-known powerful black-box optimization method. Particle Swarm Optimization (PSO) \cite{kennedy1995pso} makes use of the incumbents to guide the swarm (set of data points) for searching the optima, which serves as the inspiration for our proposed method. Empirically, EA methods are generally less sample-efficient than BO algorithms.

\section{\psbo: High-Dimensional Bayesian Optimization via Incumbent-guided Direction Lines and Subspace Embeddings}
In this section, we present our proposed method \psbo. 
As illustrated in Fig. \ref{fig:illustration}, \psbo maintains a set of $m$ incumbent-guided lines simultaneously (Sec. \ref{sec-method:incumbent-line}). 
Then, the best line among the set is selected using a multi-armed bandit strategy (Sec. \ref{sec-method:line-direction-selection}). 
Subsequently, we sample the next data point using our novel line-based optimization and multi-objective acquisition functions (Sec. \ref{sec-method:line-based-optimization}). Finally, our subspace embedding technique improves the efficiency of \psbo (Sec. \ref{sec-method:subspace-embedding}). An overview of \psbo is illustrated in Fig. \ref{fig:illustration}.

\begin{figure*} [t]
  \centering
  \includegraphics[width=\textwidth]{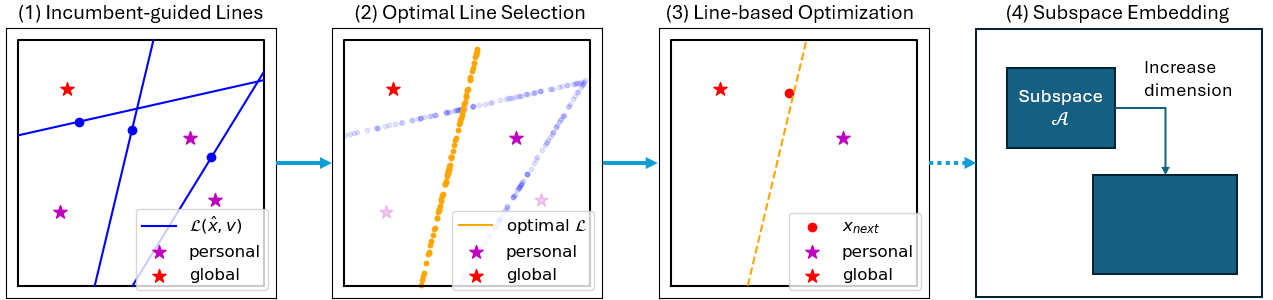}
\caption{Illustration of our proposed \psbo algorithm. (1) The incumbent-guided lines are constructed from the personal and global incumbents. (2) The optimal lines are selected based on Thompson Sampling MAB strategy. (3) Line-based Optimization is performed following the optimal line. (4) Subspace embedding technique is incorporated to enhance the performance.} \label{fig:illustration}
\end{figure*}

\subsection{Incumbent-guided Direction Lines} \label{sec-method:incumbent-line}
Our method, \psbo, is a line-based optimization approach in which we propose to compute the guiding lines using the \textit{incumbents}, i.e., the best data points found so far. \psbo inspires the idea from the PSO algorithm \cite{kennedy1995pso}, particularly the velocity formula, to define the incumbent-guided lines. At iteration $t$, let us denote $\{\mathbf{x}^{(t)}_i\}|_{i=1}^{m}$ as the set of $m$ data points (aka the $m$ particles in PSO).
Let us denote $\mathbf{p}^{(t)}_i$ as the personal incumbent (aka PSO personal best) of the $i$-th data point $\mathbf{x}^{(t)}_i$, which is its best historical location (in terms of function values) up to iteration $t$. Let us denote $\mathbf{g}^{(t)}$ as the global incumbent (aka PSO global best), which is the best data point in observed dataset $\mathcal{D}$ up to $t$. Details on these incumbents can be found in Appendix Sec. \ref{sec-appendix:incumbents}.
The incumbent-guided direction $\mathbf{v}^{(t+1)}_i$ for the $i$-th data point $\mathbf{x}^{(t)}_i$ in the next iteration is defined as,
\begin{equation} \label{eq:velocity}
    \mathbf{v}^{(t+1)}_i = w\bar{\mathbf{x}}^{(t)}_i + \mathbf{r}_1 c_1 \bar{\mathbf{p}}_i^{(t)} + \mathbf{r}_2 c_2 \bar{\mathbf{g}}^{(t)},
\end{equation}
where $\bar{\mathbf{x}}^{(t)}_i$ is the displacement vector of the $i$-th point $\mathbf{x}^{(t)}_i$ relative to its last update, $\bar{\mathbf{p}}_i^{(t)}=\mathbf{p}^{(t)}_i - \mathbf{x}^{(t)}_i$ is the personal best direction of the $i$-th point $\mathbf{x}^{(t)}_i$, and $\bar{\mathbf{g}}^{(t)}=\mathbf{\mathbf{g}}^{(t)} - \mathbf{x}^{(t)}_i$ is the global best direction. The uniformly random vectors $\mathbf{r}_1$ and $\mathbf{r}_2$ are sampled from $\mathcal{U}([0,1]^d)$, and $w$, $c_1$, $c_2$ are coefficients that control exploration and exploitation. 
The set of incumbent-guided lines for the $m$ data points are then defined as $\mathbb{L}=\{\mathbf{\mathcal{L}(\hat{x}}_i, \mathbf{v}_i)\}|_{i=1}^{m}$, where each line $\mathcal{L}_i=\mathcal{L}(\mathbf{\hat{x}}_i, \mathbf{v}_i)$ passes through a point $\mathbf{\hat{x}}_i = \mathbf{x}^{(t)}_i$ and follows directions $\mathbf{v}_i$ as in Eq. (\ref{eq:velocity}). These incumbent-guided lines encourage exploitation in promising regions by focusing the search on incumbents' directions, while preserving the explorative behavior due to inherent randomness. As shown by the performance of PSO \cite{kennedy1995pso}, directions $\mathbf{v}_i$ can point towards areas containing the global optimum, effectively guiding the line-based optimization approach towards it. Note the settings for incumbent-guided lines including $m$, $w$, $c_1$, $c_2$ can follow PSO (See Sec. \ref{sec-exp:baselines_settings}).

\subsection{Adaptive Line Selection} \label{sec-method:line-direction-selection}
In this section, we describe our proposed technique that adaptively selects the most optimal direction from the set of $m$ incumbent-guided line directions $\mathbb{L}=\{\mathcal{L}_i\}|_{i=1}^{m}$ formulated in Sec. \ref{sec-method:incumbent-line}. We formulate this problem as a multi-armed bandit (MAB) problem, in which each of $m$ lines $\mathcal{L}_i$ represents an arm to be pulled, and the goal is to choose the most optimal arm (i.e., the arm with the highest reward/performance). To solve this MAB problem, we employ Thompson Sampling (TS) \cite{Thompson1933TS}, a well-known algorithm for decision-making tasks \cite{agrawal2012TSMAB, Eriksson2019TuRBO}. 
We leverage the GP surrogate model, trained from all observed data so far, as the probability distribution to compute the reward of each arm.
To compute the reward, we first sample a random realization $g$ from the posterior of the GP surrogate model. Then we sample a pool of candidate data points across all $m$ lines and compute the TS values based on the random realization $g$. The reward $r_i$ of line $\mathcal{L}_i$ is defined as the maximum TS values among all data points on $\mathcal{L}_i$, i.e., $r_i=\max_{\mathbf{x}\in\mathcal{L}_i}{g(\mathbf{x})}$. Finally, we maximize the reward among $m$ lines to identify the optimal line chosen for line-based optimization. 
Formally, the line $\mathcal{L}_{i^*}$ is selected by maximizing the following equation,
\begin{equation} \label{eq:line-select}
    i^* = \argmax_{i=1,\dots,m}{\max_{\mathbf{x}\in \mathcal{L}_i}{g(\mathbf{x})}} \quad \text{where} \quad g \sim \mathcal{GP}(\mathcal{D}),
\end{equation}
and $\mathcal{GP}(\mathcal{D})$ denotes the GP posterior trained from all the observed data $\mathcal{D}$ so far.
Note that even though we use TS for line direction selection, the line-based optimization process can employ different acquisition functions. 
The pseudo code for this procedure is shown in Alg. \ref{alg:line-select} in Appendix Sec. \ref{sec-appendix:line-direction-selection}.

\subsection{Incumbent-guided Line-based Optimization} \label{sec-method:line-based-optimization}
In this section, we present the line-based optimization process developed for the proposed incumbent-guided lines. Let the optimally chosen incumbent-guided direction line from Sec. \ref{sec-method:line-direction-selection} be denoted as $\mathcal{L}_{i^*}=\mathcal{L}(\hat{\mathbf{x}}_{i^*}^{(t)}, \mathbf{v}_{i^*})$.
The main idea is to suggest the next data point for observation by guiding $\hat{\mathbf{x}}_{i^*}^{(t)}$ following $\mathcal{L}_{i^*}$, while being quantified by a common acquisition function $\alpha(.)$, such as EI, TS.
Instead of optimizing $\alpha(.)$ solely over the guiding line $\mathcal{L}_{i^*}$ (as done in LineBO), we optimize $\alpha(.)$ in the entire search space $\mathcal X$ while imposing additional constraints that incorporate the incumbent information. 
Specifically, we define the Euclidean distance constraints to the personal incumbent of $\hat{\mathbf{x}}_{i^*}^{(t)}$ and the current global incumbents: $L_{\mathbf{p}}(\mathbf{x}) = \Vert\mathbf{x}-  \mathbf{p}_{i^*}^{(t)}\Vert$ and $L_{\mathbf{g}}(\mathbf{x}) = \Vert\mathbf{x} - \mathbf{g}^{(t)}\Vert$, respectively. These constraints encourage exploitation in the direction of incumbents, which can guide the search towards the global optimum. We then formulate the acquisition optimization problem as a multi-objective (MO) optimization problem where the Euclidean distances are treated as additional objectives alongside the acquisition function.
The MO acquisition function for \psbo is then,
\begin{equation} \label{eq:moo-acq}
    \mathcal{P} = \argmax_{\mathbf{x}\in\mathcal{X}}{ \big( f_\alpha(\mathbf{x}), f_{\mathbf{p}}(\mathbf{x}), f_{\mathbf{g}}(\mathbf{x}) \big)},
\end{equation}
where $f_\alpha(.) = \alpha(.)$, $f_\mathbf{p}(.)=-L_\mathbf{p}(.)$ and $f_\mathbf{g}(.)=-L_\mathbf{g}(.)$.
In practice, to direct the acquisition optimization to follow the defined incumbent-guided line $\mathcal{L}$, we initialize Eq. (\ref{eq:moo-acq}) with data points randomly sampled on $\mathcal{L}$, then optimize Eq. (\ref{eq:moo-acq}) using off-the-shelf MO solvers, such as evolutionary algorithms. Solving Eq. (\ref{eq:moo-acq}) yields a Pareto set of $n_\mathcal{P}$ solutions $\mathcal{P} = \{\mathbf{x}_i\}|_{i=1}^{n_p}$ and a Pareto front $\mathcal{F} = \{f_{\alpha}(\mathbf{x}_i), f_{\mathbf{p}}(\mathbf{x}_i), f_{\mathbf{g}}(\mathbf{x}_i)\}|_{i=1}^{n_p}$.
Finally we select the next BO data points for observation by maximizing $\mathcal F$ w.r.t. $f_{\alpha}(.)$, 
\begin{equation} \label{eq:moo-acq-select}
    \mathbf{x}^* = \argmax_{\mathbf{x}\in\mathcal{P}}{f_{\alpha}}(\mathbf{x}).
\end{equation}
Eq. (\ref{eq:moo-acq-select}) selects the solution with the best acquisition objective on the Pareto front, ensuring that the selected data points are optimal (based on acquisition function values), while also being non-dominated by other solutions in terms of the Euclidean distances to the incumbents (based on MO optimization). See Alg. \ref{alg:line-based-opt} in Appendix Sec. \ref{sec-appendix:line-based-optimization} for the pseudo code.

\subsection{Subspace Embedding} \label{sec-method:subspace-embedding}
To further enhance \psbo's capability in handling high-dimensional optimization problems, we employ a linear random subspace embedding strategy. Specifically, we leverage the adaptive expanding subspace embedding, known as the BAxUS embedding \cite{papenmeier2022baxus}, and execute \psbo through a series of low-dimensional subspace with varying dimensionalities. Compared to other random linear embedding rules, such as HESBO \cite{Nayebi2019hesbo}, the BAxUS embedding produces the highest worst-case probability to contain the global optimum, given the same conditions (see further discussion in Sec. \ref{sec-theory:global}).
In detail, given a low-dimensional subspace $\mathcal{A}$ with dimension $d_\mathcal{A}$, \psbo proposes data points $\mathbf{x}_\mathcal{A} \in \mathcal{A}$, which are projected to the original input space $\mathcal{X}$ via a projection matrix $\mathbf{S}$.
The optimization on $\mathcal{A}$ is performed within a budget $T_{d_\mathcal{A}}$, after which the subspace $\mathcal{A}$ is expanded to a subspace with higher dimensionality. 
During the expansion of $\mathcal{A}$, previous observations and the set of $m$ incumbent-guided lines are preserved via embedding matrix $\mathbf{S}$. 
Note that we only employ the BAxUS embedding rule, not other components of the 
 BAxUS algorithm such as trust regions. Specifically, we adopt the rule for increasing the subspace dimensionality per expansion and the number of iterations budget $T_{d_\mathcal{A}}$ for each subspace $\mathcal{A}$. The subspace expansion process is conducted repeatedly until $d_\mathcal{A}$ reaches the input dimension $d$, at which \psbo operates in the input space $\mathcal{X}$.

\subsection{The Proposed \psbo Algorithm} \label{sec-method:overall-algorithm}
Overall, the \psbo algorithm operates as follows (pseudo-code in Alg. \ref{alg:ps-bo-alg} and illustration in Fig. \ref{fig:illustration}). At the beginning, based on a predefined initial subspace dimension $d_\mathcal{A}$, we initialize the embedding matrix $\mathbf{S}$, compute the budget $T_{d_\mathcal{A}}$, randomly initialize observed dataset $\mathcal{D}_0 \in \mathcal{A}$ and randomly choose $m$ data points in $\mathcal{D}_0$ (lines \ref{alg-line:init_subspace}-\ref{alg-line:init_data_points}). 
Then, in each iteration, after constructing a GP surrogate model $\mathcal{GP}$, we sequentially compute the set of $m$ incumbent-guided directions $\mathbb{L}=\{\mathcal{L}(\mathbf{\hat{x}}_i, \mathbf{v}_i)\}|_{i=1}^{m}$, select the most optimal line direction $\mathcal{L}_{i^*}$ and perform line-based optimization to suggest the next data point for observation $\mathbf{x}_\mathcal{A}\in \mathcal{A}$ (lines \ref{alg-line:build_gp}-\ref{alg-line:line_optimize}). 
Then, we project $\mathbf{x}_\mathcal{A}$ to $\mathcal{X}$ via projection matrix $\mathbf{S}$ and compute objective function evaluation (lines \ref{alg-line:project_data}-\ref{alg-line:compute_fx}). 
The process repeats until the budget $T_{d_\mathcal{A}}$ for subspace $\mathcal{A}$ is depleted. After that, if $d_\mathcal{A} < d$, we expand the subspace $\mathcal{A}$ dimensionality, update the matrix $\mathbf{S}$ and recompute the budget $T_{d_\mathcal{A}}$ (line \ref{alg-line:update_subspace}). Otherwise, we discard all previous observed data, and restart optimization in the input subspace (line \ref{alg-line:restart}). The algorithm repeats until the predefined budget $T$ is depleted, then returns the best solution found $\mathbf{x}^*$ as the final solution. 

\begin{algorithm}[tb] 
   \caption{The \psbo Algorithm}
   \label{alg:ps-bo-alg}
\begin{algorithmic}[1]
   \State {\bfseries Input:} Function $f(.)$, budget $T$, initial dimension $d_\mathcal{A}$, number of data points $m$
   \State {\bfseries Output:} The optimum $\mathbf{x}^*$
   \State Initialize subspace $\mathcal{A}$, matrix $\mathbf{S}$, budget $T_{d_\mathcal{A}}$ \label{alg-line:init_subspace}
   \State Initialize $\mathcal{D}_0 \in \mathcal{A}$ and randomly choose $m$ data points \label{alg-line:init_data_points}
   \While{budget $T$ not depleted}
     \While{budget $T_{d_\mathcal{A}}$ not depleted}
       \State Build GP surrogate model $\mathcal{GP}_\mathcal{A}$ from $\mathcal{D}$ \label{alg-line:build_gp} 
       \State Compute set of lines $\mathbb{L}=\{\mathcal{L}(\mathbf{\hat{x}}_i, \mathbf{v}_i)\}|_{i=1}^{m}$ \label{alg-line:compute_lines}
       \State $\mathcal{L}_{i^*} \leftarrow \Call{LINE-SELECT}{\mathcal{GP}_\mathcal{A}, \mathbb{L}}$ \label{alg-line:line_select} \Comment{Sec. \ref{sec-method:line-direction-selection}}
       \State $\mathbf{x}_\mathcal{A} \leftarrow \Call{LINE-OPT}{\mathcal{GP}_\mathcal{A}, \mathcal{L}_{i^*}}$ \label{alg-line:line_optimize} \Comment{Sec. \ref{sec-method:line-based-optimization}}
       \State Project $\mathbf{x}_\mathcal{X} \leftarrow \mathbf{S}^\intercal \mathbf{x}_\mathcal{A}$ \label{alg-line:project_data} 
       \State Compute function evaluation $y=f(\mathbf{x}_\mathcal{X}) +\varepsilon$ \label{alg-line:compute_fx}
       \State Update $\mathcal{D} \leftarrow \{\mathbf{x}_\mathcal{A}, \mathbf{x}_\mathcal{X}, y\}$
     \EndWhile
     \If{$d_\mathcal{A} < d$}
       \State Increase $d_\mathcal{A}$, update $\mathbf{S}$, compute $T_{d_\mathcal{A}}$ \label{alg-line:update_subspace}
     \Else
       \State Discard previous dataset, set $d_\mathcal{A} = d$, $\mathbf{S}=\mathbb{I}^d$ \label{alg-line:restart}
     \EndIf
   \EndWhile
   \State Return $\mathbf{x}^* = \argmin_{\mathbf{x}^{(i)} \in \mathcal{D}} \{y^{(i)}\}_{i=1}^N$
\end{algorithmic}
\end{algorithm}

\section{Theoretical Analysis}
In this section, we provide theoretical analysis of the convergence property of \psbo, including local and global convergence properties. We prove that our incumbent-guided line-based optimization has a sub-linear simple regret bound, enabling \psbo to converge to (local) optimum. Moreover, as we incorporate a subspace embedding technique, we discuss the global convergence property given the chosen embedding rule. 

\subsection{Local Convergence} \label{sec-theory:local}
The main idea is to show that \psbo, leveraging the proposed incumbent-guided directions, can converge to a (local) optimum, supported by a sub-linear simple regret bound. The simple regret $r_t$ is defined as the difference between the global optimum value $f^*$ and the best function value $y_i$ found so far until iteration $t$. For a minimization problem, $r_t=\min{y^{(i)}}|_{i=1}^t - f^*$. We will now proceed to derive the bound for the simple regret.
We first impose the following assumption on the objective function $f$.
\begin{assumption} [RKHS] \label{assumption:rkhs}
The objective function $f$ is a member of a reproducing kernel Hilbert space $\mathcal{H}(k)$ with known kernel $k: \mathcal{X} \times \mathcal{X} \rightarrow \mathbb{R}$ and bounded norm ${\Vert f \Vert}_{\mathcal{H}} \le B$.
\end{assumption}
We leverage the standard analysis of BO derived in \cite{srinivas2009gpucb} for our regret analysis. In particular, the regret bound is often expressed in terms of the \textit{maximum information gain} $\gamma_T$, whose bounds are known for common kernels such as Mat\'{e}rn and RBF \cite{srinivas2009gpucb, seeger2008information}. Similar to LineBO \cite{kirschner2019linebo}, we impose the following assumptions on $\gamma_T$.
\begin{assumption} [Maximum Information Gain] \label{assumption:gamma_t}
Let $k: \mathcal{X} \times \mathcal{X} \rightarrow \mathbb{R}^+$ be a one-dimensional kernel. Let $\kappa = 2/(2\nu + 2)$, where $\nu$ is the smoothness factor of kernel $k$. The maximum information gain is then
\begin{equation}
    \gamma_T(k) \le \mathcal{O}\left(T^\kappa \log{T}\right).
\end{equation}
\end{assumption}
In \psbo, as we use the incumbent-guided line as the search direction, we derive the following lemma.
\begin{lemma}[Incumbent-guided Search Direction] \label{lemma:informed-direction}
Let $\mathbf{r}_1, \mathbf{r}_2\in \mathbb{R}^d$ be two random vectors sampled from a uniform distribution $\mathcal{U}([0,1]^d)$, $\mathbf{h}_1, \mathbf{h}_2 \in \mathbb{R}^d$ are arbitrary fixed vectors, and the element-wise multiplications $\mathbf{v}_1 = \mathbf{r}_1 \circ \mathbf{h}_1$ and $\mathbf{v}_2 = \mathbf{r}_2 \circ \mathbf{h}_2$. Then for all vectors $\mathbf{g}\in \mathbb{R}^d$
\begin{equation}
    \mathbb{E}[{\langle \mathbf{g},\mathbf{v}_1 + \mathbf{v}_2 \rangle}^2] \ge C{\Vert \mathbf{g} \Vert}^2,
\end{equation}
where constant $C\in \mathbb{R}$ is a constant such that $C \le \frac{1}{4} ({\Vert \mathbf{h}_1 \Vert}^2 \cos^2{\theta_1} + {\Vert \mathbf{h}_2 \Vert}^2 \cos^2{\theta_2}) $, and $\theta_1$, $\theta_2$ are the angles between $\mathbf{g}$ and $\mathbf{h}_1$, $\mathbf{h}_2$, respectively.
\end{lemma}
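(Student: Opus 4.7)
The plan is to compute $\mathbb{E}[\langle \mathbf{g},\mathbf{v}_1+\mathbf{v}_2\rangle^2]$ directly, exploiting that each $\mathbf{r}_k$ has independent $\mathcal{U}([0,1])$ coordinates (so $\mathbb{E}[r_{k,i}]=1/2$ and $\mathrm{Var}(r_{k,i})=1/12$), and that $\mathbf{r}_1\perp\mathbf{r}_2$. The stated lower bound is then recovered by replacing $\langle \mathbf{g},\mathbf{h}_k\rangle^2$ with $\|\mathbf{g}\|^2\|\mathbf{h}_k\|^2\cos^2\theta_k$ via the geometric identity defining $\theta_k$.

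Concretely, I would introduce $a_k := \langle \mathbf{g},\mathbf{v}_k\rangle = \sum_{i=1}^{d} g_i\, h_{k,i}\, r_{k,i}$ and first compute the marginal moments by linearity and coordinate-wise independence within each $\mathbf{r}_k$:
\begin{equation}
\mathbb{E}[a_k] = \tfrac{1}{2}\langle \mathbf{g},\mathbf{h}_k\rangle, \qquad \mathrm{Var}(a_k) = \tfrac{1}{12}\sum_{i=1}^{d} g_i^2 h_{k,i}^2,
\end{equation}
so $\mathbb{E}[a_k^2] = \tfrac{1}{4}\langle \mathbf{g},\mathbf{h}_k\rangle^2 + \tfrac{1}{12}\sum_i g_i^2 h_{k,i}^2$. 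Using $\mathbf{r}_1\perp\mathbf{r}_2$ I would then evaluate the cross moment $\mathbb{E}[a_1 a_2] = \mathbb{E}[a_1]\mathbb{E}[a_2] = \tfrac{1}{4}\langle\mathbf{g},\mathbf{h}_1\rangle\langle\mathbf{g},\mathbf{h}_2\rangle$, and assemble
\begin{equation}
\mathbb{E}\bigl[\langle \mathbf{g},\mathbf{v}_1+\mathbf{v}_2\rangle^2\bigr] \;=\; \tfrac{1}{4}\bigl(\langle \mathbf{g},\mathbf{h}_1\rangle + \langle \mathbf{g},\mathbf{h}_2\rangle\bigr)^2 + \tfrac{1}{12}\sum_{i=1}^{d} g_i^2\bigl(h_{1,i}^2 + h_{2,i}^2\bigr).
\end{equation}
Substituting $\langle \mathbf{g},\mathbf{h}_k\rangle^2 = \|\mathbf{g}\|^2\|\mathbf{h}_k\|^2\cos^2\theta_k$ and keeping only the two pure squares inside the first bracket produces a lower bound of the stated form $C\|\mathbf{g}\|^2$ with $C$ matching $\tfrac{1}{4}\bigl(\|\mathbf{h}_1\|^2\cos^2\theta_1 + \|\mathbf{h}_2\|^2\cos^2\theta_2\bigr)$.

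The main obstacle is the cross term $\tfrac{1}{2}\langle \mathbf{g},\mathbf{h}_1\rangle\langle \mathbf{g},\mathbf{h}_2\rangle$ concealed inside $(\langle \mathbf{g},\mathbf{h}_1\rangle+\langle \mathbf{g},\mathbf{h}_2\rangle)^2$: its sign depends on whether the personal and global incumbent directions fall on the same side of $\mathbf{g}$, so the identification with $\tfrac{1}{4}(\|\mathbf{h}_1\|^2\cos^2\theta_1+\|\mathbf{h}_2\|^2\cos^2\theta_2)\|\mathbf{g}\|^2$ requires some care. I would handle it by either (i) using the natural regime $\cos\theta_1\cos\theta_2\ge 0$ in which both incumbents pull toward the same half-space containing $\mathbf{g}$, so the cross term is automatically non-negative and the target bound drops out directly; or (ii) absorbing any adverse cross contribution into the non-negative variance residual $\tfrac{1}{12}\sum_i g_i^2(h_{1,i}^2+h_{2,i}^2)$ via an AM--GM / Cauchy--Schwarz comparison between $|\langle \mathbf{g},\mathbf{h}_1\rangle\langle \mathbf{g},\mathbf{h}_2\rangle|$ and $\tfrac{1}{2}(\langle \mathbf{g},\mathbf{h}_1\rangle^2+\langle \mathbf{g},\mathbf{h}_2\rangle^2)$. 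This case split is precisely what the \emph{upper} bound on $C$ in the statement accommodates: $C$ equals $\tfrac{1}{4}(\|\mathbf{h}_1\|^2\cos^2\theta_1+\|\mathbf{h}_2\|^2\cos^2\theta_2)$ in the aligned regime and is strictly smaller (but still positive) otherwise.
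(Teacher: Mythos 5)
Your moment computations are correct and, up to the cross term, follow the same route as the paper: the paper likewise expands $\mathbb{E}[{\langle \mathbf{g},\mathbf{v}_k\rangle}^2]=\frac{1}{12}{\Vert \mathbf{g}\circ\mathbf{h}_k\Vert}^2+\frac{1}{4}{\langle \mathbf{g},\mathbf{h}_k\rangle}^2$ using $\mathbb{E}[r_i^2]=1/3$ and $\mathbb{E}[r_ir_j]=1/4$, then drops the non-negative $\frac{1}{12}$ term. Where you diverge is the cross moment: the paper asserts $\mathbb{E}[\langle\mathbf{g},\mathbf{v}_1\rangle\langle\mathbf{g},\mathbf{v}_2\rangle]=0$ on the grounds that $\mathbb{E}[\mathbf{v}_1\mathbf{v}_2^\intercal]=0$ by independence, which is incorrect---independence only gives $\mathbb{E}[\mathbf{v}_1\mathbf{v}_2^\intercal]=\mathbb{E}[\mathbf{v}_1]\mathbb{E}[\mathbf{v}_2]^\intercal=\frac{1}{4}\mathbf{h}_1\mathbf{h}_2^\intercal\neq 0$, since the $\mathbf{r}_k$ are not zero-mean. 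Your value $\mathbb{E}[a_1a_2]=\frac{1}{4}\langle\mathbf{g},\mathbf{h}_1\rangle\langle\mathbf{g},\mathbf{h}_2\rangle$ is the correct one, so at the delicate step your analysis is sharper than the paper's own proof.

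The gap is in your proposed resolution (ii), which cannot be closed. AM--GM bounds the adverse cross contribution by $\frac{1}{2}\vert\langle\mathbf{g},\mathbf{h}_1\rangle\langle\mathbf{g},\mathbf{h}_2\rangle\vert\le\frac{1}{4}({\langle\mathbf{g},\mathbf{h}_1\rangle}^2+{\langle\mathbf{g},\mathbf{h}_2\rangle}^2)$, i.e., it can cancel the entire signal term, leaving only the residual $\frac{1}{12}\sum_i g_i^2(h_{1,i}^2+h_{2,i}^2)$, and that residual need not dominate the target $\frac{1}{4}({\Vert\mathbf{h}_1\Vert}^2\cos^2\theta_1+{\Vert\mathbf{h}_2\Vert}^2\cos^2\theta_2){\Vert\mathbf{g}\Vert}^2$. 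Concretely, take $d=1$, $g=1$, $h_1=1$, $h_2=-1$; then
\begin{equation*}
\mathbb{E}\left[(r_1-r_2)^2\right]=\tfrac{1}{3}-\tfrac{1}{2}+\tfrac{1}{3}=\tfrac{1}{6}<\tfrac{1}{2}=\tfrac{1}{4}\left({\Vert h_1\Vert}^2\cos^2\theta_1+{\Vert h_2\Vert}^2\cos^2\theta_2\right){\Vert g\Vert}^2,
\end{equation*}
so the inequality with $C$ \emph{equal} to the stated quantity is false in the non-aligned regime; no absorption argument can rescue it. The bound only holds as stated under an alignment condition such as your case (i), $\langle\mathbf{g},\mathbf{h}_1\rangle\langle\mathbf{g},\mathbf{h}_2\rangle\ge 0$ (equivalently $\cos\theta_1\cos\theta_2\ge 0$), or with a strictly weaker constant extracted from the $\frac{1}{12}$ residual alone. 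You should state explicitly which of these you adopt rather than leaving (i) and (ii) as alternatives, and note that the vacuous reading of ``$C\le\frac{1}{4}(\cdots)$'' (e.g., $C=0$) would not suffice downstream, where Proposition~\ref{prop:regret} needs $C>0$ to obtain a contraction factor.
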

See Appendix Sec. \ref{sec-appendix:proof_guided_line} for the proof. In Lemma \ref{lemma:informed-direction}, $\mathbf{h}_1$ and $\mathbf{h}_2$ represents the two incumbent directions, $\mathbf{\bar{p}}$ and $\mathbf{\bar{g}}$ in Eq. (\ref{eq:velocity}). From Lemma \ref{lemma:informed-direction}, we derive the following simple regret bound.
\begin{proposition} \label{prop:regret}
Let Assumptions \ref{assumption:rkhs} and \ref{assumption:gamma_t} hold. After $T$ iterations of \psbo with random directions that satisfy Lemma \ref{lemma:informed-direction}, the expected simple regret over the random vectors $\mathbf{r}_1$ and $\mathbf{r}_2$ is bounded as follows,
\begin{equation}
    \mathbb{E}[r_T] \le \mathcal{O}((d \log{T}/ T)^{1/2-\kappa} ).
\end{equation}
\end{proposition}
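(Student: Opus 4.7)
The plan is to closely follow the regret analysis of LineBO \cite{kirschner2019linebo}, since both algorithms perform line-based Bayesian optimization with randomized search directions and the stated rate matches LineBO's rate. The crucial ingredient in LineBO's proof is a lower bound on the expected squared inner product between the search direction and the gradient of $f$: for uniform directions on the unit sphere, $\mathbb{E}[\langle \mathbf{g},\mathbf{v}\rangle^2]\ge \|\mathbf{g}\|^2/d$. Lemma \ref{lemma:informed-direction} supplies the analogous bound $\mathbb{E}[\langle \mathbf{g},\mathbf{v}_1+\mathbf{v}_2\rangle^2]\ge C\|\mathbf{g}\|^2$ for BOIDS' incumbent-guided directions, which is exactly the property the rest of the analysis needs.

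First, I would invoke Assumption \ref{assumption:rkhs} together with the agnostic GP confidence bounds of \cite{srinivas2009gpucb}, restricted to the chosen one-dimensional line $\mathcal{L}_{i^*}$ at iteration $t$, yielding $|f(\mathbf{x})-\mu_t(\mathbf{x})|\le \beta_t^{1/2}\sigma_t(\mathbf{x})$ for all $\mathbf{x}\in\mathcal{L}_{i^*}$ with high probability, where $\beta_t$ depends poly-logarithmically on $t$ and on $B$. Second, I would derive a one-step expected descent inequality: applying Lemma \ref{lemma:informed-direction} with $\mathbf{g}$ chosen as the descent direction at the current iterate, and combining it with the confidence bound, the expected decrease $\mathbb{E}[f(\hat{\mathbf{x}}^{(t)})-f(\hat{\mathbf{x}}^{(t+1)})]$ is lower bounded in terms of $C\|\nabla f\|^2$ and the one-dimensional posterior variance. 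This is the BOIDS analogue of the line-based descent lemma in LineBO, with the $1/d$ factor replaced by $C$. Third, I would telescope the per-iteration progress over $T$ iterations and invoke Assumption \ref{assumption:gamma_t} to bound the cumulative one-dimensional information gain by $\gamma_T\le \mathcal{O}(T^\kappa \log T)$. Finally, applying the convex-combination argument from LineBO, the average instantaneous regret converges at the stated rate $\mathcal{O}((d\log T/T)^{1/2-\kappa})$, where the $d$ factor is absorbed into a uniform lower bound on $C$ across iterations.

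The main obstacle will be handling the fact that the incumbent-guided directions depend on the history via the personal and global incumbents, so the vectors $\mathbf{h}_1,\mathbf{h}_2$ and angles $\theta_1,\theta_2$ appearing in Lemma \ref{lemma:informed-direction} are not fixed across iterations. To apply Lemma \ref{lemma:informed-direction} rigorously, I would condition on the filtration up to iteration $t$ (thereby freezing $\mathbf{h}_1,\mathbf{h}_2$ as deterministic quantities), apply the lemma to the remaining independent randomness in $\mathbf{r}_1,\mathbf{r}_2$, and then take a uniform worst-case lower bound on $C$ across iterations. A secondary subtlety is that the multi-objective acquisition procedure of Section \ref{sec-method:line-based-optimization} may return a point off $\mathcal{L}_{i^*}$; however, because the MO solver is initialized on $\mathcal{L}_{i^*}$ and the returned point is Pareto-optimal with respect to $f_\alpha$, the line-based descent guarantee is preserved (or only improved), so the regret analysis transfers without essential modification.
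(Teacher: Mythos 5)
Your overall plan matches the paper's: the proof is indeed an adaptation of the LineBO analysis in which Lemma \ref{lemma:informed-direction} plays exactly the role of the bound $\mathbb{E}[\langle \mathbf{g},\mathbf{v}\rangle^2]\ge \|\mathbf{g}\|^2/d$ for uniform directions, and you correctly flag the two subtleties the paper glosses over (the history-dependence of $\mathbf{h}_1,\mathbf{h}_2$, handled by conditioning, and the fact that the MO acquisition step may leave the line). However, there is one concrete gap: you never introduce the additional regularity assumptions that the argument cannot do without. The paper's proof explicitly (if quietly) assumes $f$ is $\alpha$-convex and $\beta$-smooth, and these are essential. Your step two produces a descent inequality of the form ``expected decrease $\gtrsim C\|\nabla f\|^2$,'' and telescoping such an inequality over $T$ iterations only yields $\min_t\|\nabla f(\hat{\mathbf{x}}^{(t)})\|^2 = \mathcal{O}(1/T)$, i.e.\ convergence to stationarity in gradient norm. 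To convert that into a bound on the \emph{simple regret} $f(\hat{\mathbf{x}}^{(T)})-f(\mathbf{x}_*)$ you need strong convexity (or a PL-type inequality), and once you have it the natural argument is not a telescoping/averaging one but a contraction: this is what the paper does, invoking Lemma 4 of \cite{kirschner2019linebo} to get $\mathbb{E}[f(\mathbf{x}_*^{(t+1)})-f(\mathbf{x}_*)]\le (1-\tfrac{\alpha C}{2\beta})(f(\hat{\mathbf{x}}^{(t)})-f(\mathbf{x}_*))$ for the \emph{exact} line-search point, where $C$ comes from Lemma \ref{lemma:informed-direction}.

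The second place where your sketch diverges from the actual mechanism is where $T$, $d$ and $\gamma_T$ enter. In the paper the information-gain assumption does not get telescoped across the outer iterations; instead, each inner one-dimensional BO line search is an \emph{inexact} line search with additive accuracy loss $\varepsilon$, the recursion $\mathbb{E}[f(\hat{\mathbf{x}}^{(T)})-f(\mathbf{x}_*)]\le \varepsilon/\gamma+(1-\gamma)^T(f(\hat{\mathbf{x}}^{(0)})-f(\mathbf{x}_*))$ with $\gamma=\tfrac{\alpha C}{2\beta}$ accumulates these losses geometrically, and the rate is obtained by \emph{setting} $\varepsilon=(d\log T/(2T))^{(1-2\kappa)/2}$ --- the accuracy achievable by the one-dimensional sub-routine under Assumption \ref{assumption:gamma_t}. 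So your proposal has the right skeleton and the right key lemma, but as written the passage from the descent inequality to the stated function-value bound does not go through; you would need to add the $\alpha$-convexity and $\beta$-smoothness hypotheses and replace the telescoping step with the contraction-plus-per-step-error recursion.
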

See Appendix Sec. \ref{sec-appendix:proof_simple_regret} for the proof. In the case of our implementation with Matern 5/2 kernel for the experiments, the simple regret is then bounded by $\mathbb{E}[r_T] \le \mathcal{O}\left((d \log{T}/ T)^{3/14} \right)$. 
Compared to LineBO \cite{kirschner2019linebo}, our simple regret is tighter bounded in terms of the input dimension $d$, while having a similar rate in terms of $T$.

\subsection{Global Convergence} \label{sec-theory:global}
As \psbo is a line-based optimization method, it inherits the global convergence property from LineBO (Theorem 1 of \citeauthor{kirschner2019linebo}, \citeyear{kirschner2019linebo}). 
However, because \psbo also incorporates the linear random subspace embedding technique, the global convergence property depends on whether or not the subspace $\mathcal{A}$ contains the global optimum $\mathbf{x}^*_{\mathcal{X}}$, i.e., there exists a point $\mathbf{x}^*_\mathcal{A} \in \mathcal{A}$ such that $\mathbf{S}^\intercal \mathbf{x}^*_\mathcal{A} = \mathbf{x}^*_{\mathcal{X}}$, where $\mathbf{S}$ is the embedding matrix.
Previous works analyze this property via \textit{the probability of the embedding to contain the global optimum} \cite{Nayebi2019hesbo, Letham2020Alebo, papenmeier2022baxus}. As \psbo employs BAxUS embedding, we use the following theorem (Theorem 1 of \citeauthor{papenmeier2022baxus}, \citeyear{papenmeier2022baxus}) to reason about this probability,
\begin{theorem} \label{theorem:global}
Let $\alpha=\lfloor d/d_\mathcal{A}\rfloor$, $\beta = \lceil d/d_\mathcal{A} \rceil$. Denote $\binom{a}{b}$ as the binomial coefficients. The probability of BAxUS embedding to contain the global optimum is,
\begin{equation} \label{eq:baxus-worst-case-prob}
    p^*=\frac{\sum_{i=0}^{d_e}\binom{d_\mathcal{A}(1+\alpha) - d}{i} \binom{d-d_\mathcal{A}\beta}{d_e-i} \alpha^i \beta^{d_e-i}}{\binom{d}{d_e}}.
\end{equation}
\end{theorem}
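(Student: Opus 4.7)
My plan is to identify the containment event with a balls-in-bins occupancy event determined by the BAxUS random grouping, and then to evaluate its probability by elementary counting.

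First, I will recall the structure of the BAxUS embedding $\mathbf{S}$: the $d$ input coordinates are partitioned into $d_\mathcal{A}$ groups uniformly at random, each group is assigned to one target coordinate, and each input coordinate is equipped with an independent $\pm 1$ sign. Because group sizes are made as balanced as possible, every group contains either $\alpha=\lfloor d/d_\mathcal{A}\rfloor$ or $\beta=\lceil d/d_\mathcal{A}\rceil$ coordinates. Solving the two constraints $n_\alpha+n_\beta=d_\mathcal{A}$ and $\alpha n_\alpha+\beta n_\beta=d$ yields
\begin{equation*}
    n_\alpha=d_\mathcal{A}(1+\alpha)-d, \qquad n_\beta=d-d_\mathcal{A}\alpha,
\end{equation*}
i.e.\ exactly $n_\alpha$ small groups of size $\alpha$ and $n_\beta$ large groups of size $\beta$.

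Second, I will reduce the containment of $\mathbf{x}^*_{\mathcal{X}}$ to an occupancy statement about the $d_e$ effective coordinates. Under the standard effective-dimension assumption, only a set $E$ of $d_e$ input coordinates affects $f$, while the remaining coordinates are free. If two effective coordinates fall into the same group, they are tied through a single scalar target variable up to signs, which generically prevents $\mathbf{S}^{\intercal}\mathbf{x}_\mathcal{A}$ from matching $\mathbf{x}^*_{\mathcal{X}}$ on both of them. Conversely, if each effective coordinate lies in a distinct group, one can solve for the target values coordinate-by-coordinate (using the attached signs) to match $\mathbf{x}^*_{\mathcal{X}}$ on $E$. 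Hence the embedding contains the global optimum if and only if the $d_e$ effective coordinates occupy $d_e$ distinct groups.

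Third, I will count the favorable assignments. Conditioning on the grouping, the identity of $E$ is a uniformly random size-$d_e$ subset of the $d$ positions, giving denominator $\binom{d}{d_e}$. For the numerator I stratify by the number $i$ of effective coordinates landing in small groups: pick $i$ of the $n_\alpha$ small groups and one of the $\alpha$ positions in each, and pick $d_e-i$ of the $n_\beta$ large groups and one of the $\beta$ positions in each. Summing over $i$ gives
\begin{equation*}
    p^*=\frac{1}{\binom{d}{d_e}}\sum_{i=0}^{d_e}\binom{n_\alpha}{i}\alpha^{i}\binom{n_\beta}{d_e-i}\beta^{d_e-i},
\end{equation*}
which, after substituting the closed forms for $n_\alpha$ and $n_\beta$, matches the expression in Eq.~(\ref{eq:baxus-worst-case-prob}). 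The divisibility edge case $d_\mathcal{A}\mid d$ (so $\alpha=\beta$) is handled by the convention $\binom{n}{k}=0$ for $k>n$, which collapses the sum to a single term.

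The step I expect to be the main obstacle is the second one, namely rigorously establishing the ``iff'' between containment and distinct occupancy. The subtlety is that even with a collision, a fortuitous combination of optimum coordinates and $\pm 1$ signs could nominally satisfy $\mathbf{S}^{\intercal}\mathbf{x}_\mathcal{A}^*=\mathbf{x}^*_\mathcal{X}$ on a measure-zero slice. I would neutralize this by treating the optimum as generic in $E$ so that any collision contributes zero to the probability, making the reduction to the occupancy count exact; the remaining combinatorics is then routine.
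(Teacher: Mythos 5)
Your proposal is correct and takes essentially the same approach as the paper, which only sketches the argument---the ratio of optima-preserving assignments to all assignments of the $d_e$ effective dimensions---and defers the details to Theorem~1 of the BAxUS paper; your stratification by the number $i$ of effective coordinates landing in small bins is exactly that count. One caveat: your (correct) number of large bins is $n_\beta = d-d_\mathcal{A}\alpha$, which does not literally equal the quantity $d-d_\mathcal{A}\beta\le 0$ appearing in Eq.~(\ref{eq:baxus-worst-case-prob}); this reflects a typo in the transcribed formula ($\beta$ should be $\alpha$ in the second binomial coefficient), so you should flag the discrepancy rather than assert a match. Finally, the result is properly a lower bound (``at least $p^*$'', the worst case over optimizer locations), which is consistent with your own genericity caveat about collisions, even though both you and the theorem statement phrase it as an equality.
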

A useful result from Theorem \ref{theorem:global} is that the worst case success probability becomes one when $d_\mathcal{A}=d$, guaranteeing that the embedding in \psbo always contains the global optimum. This property, together with the simple regret bound in Proposition \ref{prop:regret}, concludes \psbo convergence property.

\section{Experiments}
In this section, we extensively evaluate our proposed method \psbo against a set of baselines on a set of benchmark problems, showing that \psbo outperforms state-of-the-art baselines. We additionally conduct ablation study to understand each component of the proposed method.

\subsection{Experimental Settings and Baselines} \label{sec-exp:baselines_settings}

We evaluate our proposed algorithm \textbf{\psbo} against a comprehensive set of baselines:
\textbf{Standard BO}, 
\textbf{LineBO} \cite{kirschner2019linebo}, 
\textbf{BAxUS} \cite{papenmeier2022baxus}, 
\textbf{HESBO} \cite{Nayebi2019hesbo}, 
\textbf{ALEBO} \cite{Letham2020Alebo}, 
\textbf{SAASBO} \cite{eriksson21saasbo}, 
\textbf{TuRBO} \cite{Eriksson2019TuRBO}, 
\textbf{CMA-BO} \cite{ngo2024cmabo}, 
\textbf{RDUCB} \cite{ziomek2023rducb},
\textbf{CMA-ES} \cite{Hansen2001CMAES} and 
\textbf{PSO} \cite{kennedy1995pso}.

For ALEBO and HESBO, we run each method with different target dimensions $d_\mathcal{A}=10$ and $d_\mathcal{A}=20$. For LineBO, we compare to CoordinateLineBO version, as it was identified as the best variant proposed by \citeauthor{kirschner2019linebo}, \citeyear{kirschner2019linebo}.
We run all methods 10 times with different random seeds and report the mean and standard error. Most methods are run with 1000 iterations on all problems, except for SAASBO (100 iterations), ALEBO (500 iterations) and RDUCB (2 days running time) due to high computational time and memory required. The settings of \psbo are as follows. For the incumbent-guided direction lines, we use a common configuration as with PSO, $m=20$, $w=0.729$, $c_1=c_2=2.05w$ \cite{shi1998wPSO, regis2014PSO_RBF, li2020PSO_FastSurrogate}. 
For the surrogate model, we use a GP with Mat\'{e}rn 5/2 kernel function. For acquisition function, we use Thompson Sampling and employ NSGA-II algorithm \cite{deb2002nsgaii} as the MO optimizer for Eq. (\ref{eq:moo-acq}). See Appendix Secs. \ref{sec-appendix:psbo-settings} and \ref{sec-appendix:baselines-settings} for the detailed implementation of \psbo and the baselines. 

\begin{figure*} [t]
  \centering
  \includegraphics[width=0.97\textwidth]{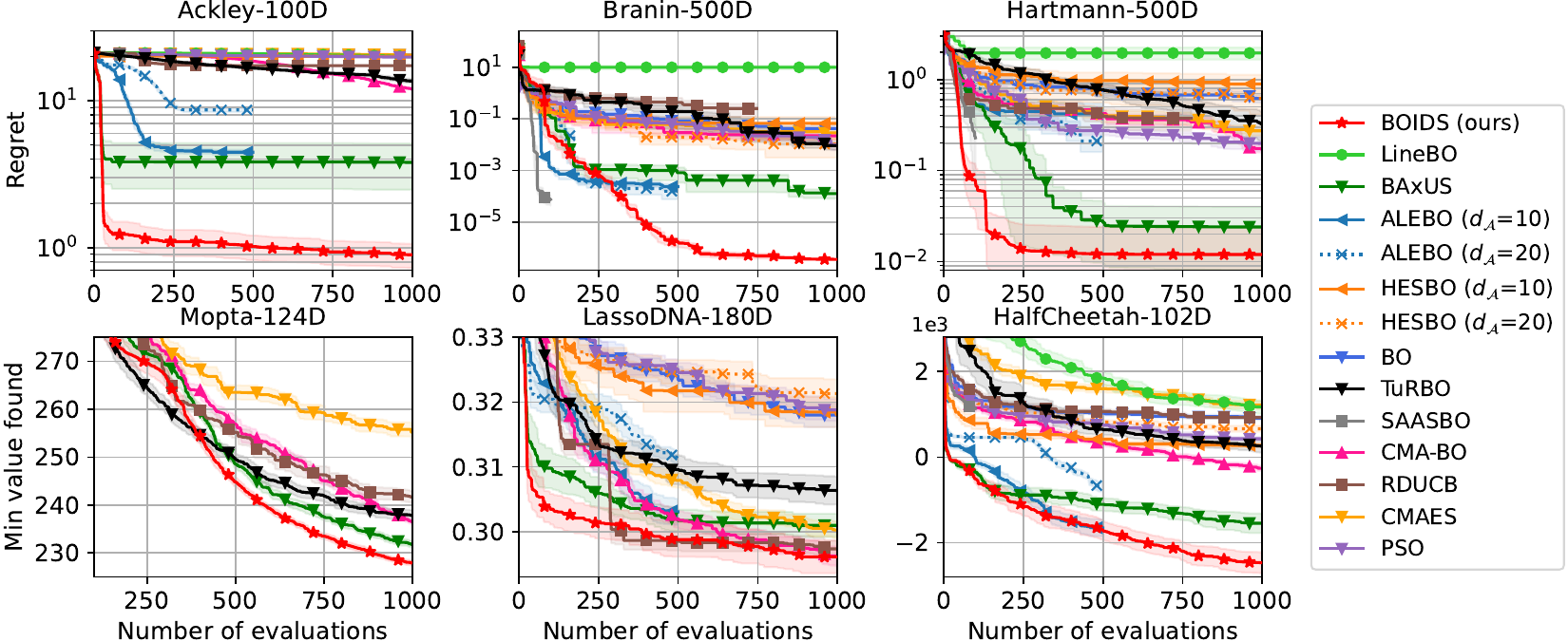}
\caption{Comparison of our proposed method, \psbo, against the state-of-the-art baselines on six minimization problems. Note that some methods (ALEBO, SAASBO and RDUCB) only have limited iterations due to the prohibitively high computational cost and memory required. Overall, \psbo outperforms all baselines significantly on most problems.} \label{fig:main-results}
\end{figure*}

\subsection{Benchmark Problems}
We conduct experiments on 3 synthetic and 3 real-world benchmark problems. The dimensions of these problems range from $100$ to $500$. For synthetic problems, we use \textit{Ackley-100D}, \textit{Branin-500D} and \textit{Hartmann-500D}, which are widely-used in BO research works \cite{Eriksson2019TuRBO, song2022mctsvs, ngo2024cmabo}.
For real-world benchmark problems, we use \textit{Mopta-124D}, \textit{LassoDNA-180D} and \textit{HalfCheetah-102D}.
Mopta-124D is a vehicle design task aiming to minimize the vehicle's weight, with code implementation from \cite{eriksson21saasbo}.
LassoDNA-180D is a hyperparameter tuning task involving a microbiology DNA dataset, with code implementation from \cite{vsehic2022lassobench}.
HalfCheetah-102D is a reinforcement learning task aiming to maximize cumulative reward, with code implementation from \cite{song2022mctsvs}. 
Details on each benchmark problem are in Appendix Sec. \ref{sec-appendix:benchmark_problems}.

\subsection{Comparison with Baselines}
Fig. \ref{fig:main-results} shows the performance of all baselines across all six benchmark problems. On synthetic problems (Ackley-100D, Branin-500D and Hartmann-500D), \psbo significantly outperforms all baselines. BAxUS converges to less optimal values. SAASBO, despite being a strong baseline, especially on Branin-500D, is prohibitively expensive and is only suitable for very limited budgets (up to 64 GB Memory for 100 iterations). For real-world benchmark problems, \psbo consistently outperforms all baselines. On Mopta-124D, \psbo shows a slower start compared to TuRBO due to its initial focus on low-dimensional subspaces, a behavior observed in subspace embedding techniques including BAxUS. However, \psbo quickly finds more optimal solutions as iterations progress. On LassoDNA-180D, even though both RDUCB and CMA-BO find similarly optimal solutions as \psbo, \psbo converges much faster. On HalfCheetah-102D, ALEBO ($d_\mathcal{A}=10$) is competitive to \psbo but requires careful tuning of the target dimension, with ALEBO ($d_\mathcal{A}=20$) showing significantly worse performance. Additionally, ALEBO is also prohibitively expensive to run (up to 40 hours for 500 iterations for each repeat). 
Overall, \psbo demonstrates superior performance across these benchmark problems, indicating a robust and efficient solution for high-dimensional optimization problems.

\subsection{Ablation Study}
We conduct an ablation study to investigate the effectiveness of each component in \psbo by alternatively removing each of the four components of \psbo (as described in Sec. \ref{sec-method:overall-algorithm}) and compare the four corresponding versions with the original \psbo. First, in version ``\textbf{W/o Guided Direction}'', we replace the incumbent-guided line component with uniformly random guiding lines $\mathcal{L}(\mathbf{\hat{x}}, \mathbf{v})$, where $\mathbf{v} \sim \mathcal{U}([l-u,u-l]^d)$. Second, in version ``\textbf{W/o Line Select}'', we remove the MAB-based line selection component and instead select the lines randomly in each iteration, $\mathcal{L}_i$ where $i=\mathcal{U}([1\dots m])$. Third, in version ``\textbf{W/o Line Opt}'', we omit the line-based optimization component, and instead optimize the acquisition function over the entire input space $\mathcal{X}$, i.e., $\mathbf{x}^{(t+1)}=\argmax_{\mathbf{x}\in\mathcal{X}}{\alpha(\mathbf{x})}$. Finally, in version ``\textbf{W/o Embedding}'', we exclude the subspace embedding component and perform optimization directly in the input dimension $d_\mathcal{A}=d$ and $\mathbf{S}=\mathbf{I}_d$. All other settings in the four versions are kept similar to the original \psbo.

We run these versions on two benchmark problems - LassoDNA-180D and HalfCheetah-102D - with 10 repeats, and compare them against the original method. We report the mean and standard error of the results in Fig. \ref{fig:abalation-results}. Overall, the results show that the original \psbo consistently shows the best performance. Among the components, the line selection component via MAB (Sec. \ref{sec-method:line-direction-selection}) seems to be the least important, as removing it does not significantly degrade the performance. Conversely, the incumbent-guided line component (Sec. \ref{sec-method:incumbent-line}) and line-based optimization component (Sec. \ref{sec-method:line-based-optimization}) are crucial due to the significant performance degradation in the related versions. The subspace embedding component (Sec. \ref{sec-method:subspace-embedding}) greatly affects the performance in HalfCheetah-102D, yet has less impact on LassoDNA-180D problem. 

\begin{figure} [h]
  \centering
  \begin{subfigure}[b]{\columnwidth}
   \centering
   \includegraphics[width=\columnwidth]{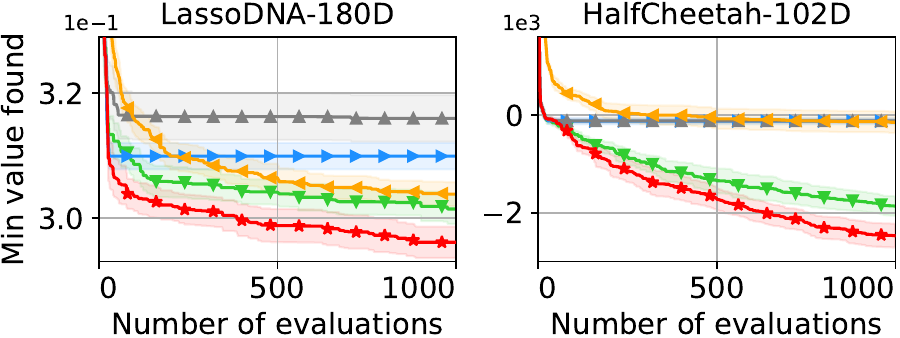}
  \end{subfigure}
  \begin{subfigure}[b]{\columnwidth}
   \centering
   \includegraphics[width=\columnwidth]{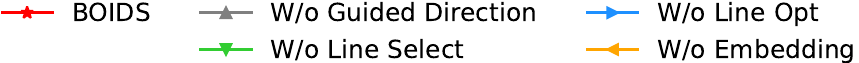}
  \end{subfigure}
\caption{Impact of \psbo components on performance when each is alternatively removed. The \textit{incumbent-guided line} component (Sec. \ref{sec-method:incumbent-line}) and the tailored \textit{line-based optimization} (Sec. \ref{sec-method:line-based-optimization}) have the most significant influence.} \label{fig:abalation-results}
\end{figure}

\section{Conclusion}
In this paper, we address the high-dimensional optimization problems for expensive black-box objective functions. We propose \psbo, a novel line-based BO algorithm using incumbent-guided direction lines. To further improve the efficiency, we select the most optimal line via a MAB approach. To further boost the performance, we employ subspace embedding technique. We theoretically analyze \psbo's convergence property and empirically demonstrate that \psbo outperforms state-of-the-art baselines on a variety of synthetic and real-world tasks.

\section*{Acknowledgments}
The first author (L.N.) would like to thank the School of Computing Technologies, RMIT University, Australia for providing computing resources on this project. Additionally, this project was undertaken with the assistance of computing resources from RACE (RMIT AWS Cloud Supercomputing Hub).

\bibliography{aaai25}

\clearpage 
\appendix
\section{Appendix}
\subsection{Details on the Incumbents Computation} \label{sec-appendix:incumbents}
In this section, we provide more details on the computation of the local and global incumbents. Note that the local and global incumbents are identical to the concepts of local and global best in PSO \cite{kennedy1995pso}. The main difference is that while PSO updates all $m$ data points in each iteration, \psbo sequentially select one data point to update in each iteration.

To compute the local incumbent $\mathbf{p}_i^{(t)}$, we keep track of the history data of $i$-th data point $\mathbf{x}_i$. As each BOIDS iteration selects one associated incumbent-guided line $\mathcal{L}_i$ to suggest the next data point for observation, this will update $\mathbf{x}_i$ to new location. The history data of $\mathbf{x}_i$ is collected by recording these updates made to all $\mathbf{x}_i$ during optimization. Then, $\mathbf{p}_i^{(t)}$ is the best location of $\mathbf{x}_i$ within its history (up to iteration $t$), in terms of function evaluation. In formal terms, let us denote the history location of $i$-data point as $\{\mathbf{x}_i^{(k)}\}|_{k=1}^{K_i^{(t)}}$, where $K_i^{(t)}$ represents the update counter of $i$-th data point up to iteration $t$. Then $\mathbf{p}^{(t)}_i=\argmin_{k=1,\dots,K_i^{(t)}}{f(\mathbf{x}_i^{(k)})}$. Note that the update counter $K_i$ of $i$-th data point is generally not equal to those of the remaining $m-1$ data points, as BOIDS may update different data point in each iteration. 

The global incumbent (global best) $\mathbf{g}^{(t)}$ is the best solution found across all $m$ data points up to iteration $t$. This is simply the best observed data point found so far up to iteration $t$, i.e., $\mathbf{g}^{(t)}=\argmin_{j=1,\dots,t}{f(\mathbf{x}^{(j)})}$.

\subsection{Proof for Lemma \ref{lemma:informed-direction}} \label{sec-appendix:proof_guided_line}
We show the proof for Lemma \ref{lemma:informed-direction} concerning the incumbent-guided direction, which is a key step in deriving simple regret of \psbo as outlined in Proposition \ref{prop:regret}.  
\begin{proof}[Proof of Lemma \ref{lemma:informed-direction}]
Ultimately, we aim to compute $\mathbb{E}[{\langle \mathbf{g}, \mathbf{v}_1 + \mathbf{v}_2 \rangle}^2]$, where $\mathbf{g} \in \mathbb{R}^d$. Before doing so, we separate the summation and derive $\mathbb{E}[{\langle \mathbf{g}, \mathbf{v}\rangle}^2]$, where $\mathbf{v} = \mathbf{r} \circ \mathbf{h}$ and $\mathbf{r} \sim \mathcal{U}([0,1]^d)$.
\begin{equation*}
\begin{split}
\mathbb{E}[{\langle \mathbf{g}, \mathbf{v}\rangle}^2] & = \mathbb{E}\left[ {\left(\sum_{i=1}^d {g_i r_i h_i} \right)}^2\right] \\
 & = \mathbb{E}\left[ \sum_{i=1}^{d}{(g_i r_i h_i)^2} + 2\mathop{\sum\sum}_{i\neq j}{g_i g_j r_i r_j h_i h_j}\right] \\
 & = \sum{g_i^2 h_i^2 \mathbb{E}\left[r_i^2\right]} + 2\mathop{\sum\sum}_{i\neq j}{g_i g_j h_i h_j \mathbb{E}\left[r_i r_j \right]}. \\
\end{split}
\end{equation*}
As $\mathbf{r}$ is uniformly random vector $\mathcal{U}([0,1]^d)$, $\mathbb{E}\left[r_i^2\right]=1/3$ and $\mathbb{E}\left[r_i r_j\right]=\mathbb{E}\left[r_i\right]  \mathbb{E}\left[r_j\right]=1/4$. By rearranging the terms, the expectation leads to,
\begin{equation} \label{eq-appendix:1}
\begin{split}
\mathbb{E}[{\langle \mathbf{g}, \mathbf{v}\rangle}^2] & = \frac{1}{12}{\Vert \mathbf{g} \circ \mathbf{h}\Vert}^2 + \frac{1}{4} {\langle {\mathbf{g}, \mathbf{h} \rangle}}^2 \\
& \geq {\Vert \mathbf{g} \Vert}^2 \frac{{\Vert \mathbf{h} \Vert}^2 \cos{\theta}^2}{4},
\end{split}
\end{equation}
where $\theta$ is the angle between $\mathbf{g}$ and $\mathbf{h}$. Now, we proceed to compute $\mathcal{E} = \mathbb{E}[{\langle \mathbf{g}, \mathbf{v}_1 + \mathbf{v}_2\rangle}^2]$. By linearity of dot product, we have
\begin{equation*}
\begin{split}
\mathcal{E} & = \mathbb{E}\left[ {\left(\langle \mathbf{g}, \mathbf{v}_1 \rangle  + \langle \mathbf{g},  \mathbf{v}_2\rangle\right)}^2 \right]  \\
& = \mathbb{E}\left[ {\langle \mathbf{g}, \mathbf{v}_1 \rangle}^2 \right]  + \mathbb{E}\left[{\langle \mathbf{g}, \mathbf{v}_2\rangle}^2\right] + 2\mathbb{E}\left[{\langle \mathbf{g}, \mathbf{v}_1\rangle} {\langle \mathbf{g}, \mathbf{v}_2\rangle} \right]  \\
\end{split}
\end{equation*}
The last equality is due to linearity of expectation. In the results, the third expectation is $\mathbb{E}\left[{\langle \mathbf{g}, \mathbf{v}_1\rangle} {\langle \mathbf{g}, \mathbf{v}_2\rangle} \right] = \mathbb{E}\left[ (\mathbf{g}^\intercal \mathbf{v}_1) (\mathbf{g}^\intercal \mathbf{v}_2) \right] = \mathbb{E}\left[ \mathbf{g}^\intercal (\mathbf{v}_1 \mathbf{v}_2^\intercal) \mathbf{g} \right] = 0$ as $\mathbb{E}[\mathbf{v}_1 \mathbf{v}_2^\intercal] = 0$ because $\mathbf{v}_1$ and $\mathbf{v}_2$ are independent. Hence, we have,
\begin{equation*}
\begin{split}
\mathbb{E}[{\langle \mathbf{g}, \mathbf{v}_1 + \mathbf{v}_2\rangle}^2] & = \mathbb{E}\left[ {\langle \mathbf{g}, \mathbf{v}_1 \rangle}^2 \right]  + \mathbb{E}\left[{\langle \mathbf{g}, \mathbf{v}_2\rangle}^2\right]. \\
\end{split}
\end{equation*}
Combination of this result with Eq. (\ref{eq-appendix:1}) concludes the proof.
\end{proof}

\subsection{Proof for Proposition \ref{prop:regret}} \label{sec-appendix:proof_simple_regret}
Given Lemma \ref{lemma:informed-direction}, we follow \citeauthor{srinivas2009gpucb} and \citeauthor{kirschner2019linebo} to derive the simple regret of \psbo. Similar to LineBO \cite{kirschner2019linebo}, we impose additional assumptions on the smoothness and convexity of the objective function.


\begin{definition}[$\alpha$-convexity] We call a differentiable function $f$ is $\alpha$-convex if there exist $\alpha>0$ such that for all $\mathbf{x},\mathbf{h}\in\mathbb{R}^d$,
\begin{equation}
    \langle \nabla f(\mathbf{x}), \mathbf{h} \rangle + \frac{\alpha}{2}{\Vert \mathbf{h}\Vert}^2 \leq  f(\mathbf{x}+\mathbf{h}) - f(\mathbf{x}).
\end{equation}
\end{definition}
\begin{definition}[$\beta$-smoothness] We call a differentiable function $f$ is $\beta$-smooth if there exist $\beta>0$ such that for all $\mathbf{x},\mathbf{h}\in\mathbb{R}^d$,
\begin{equation}
    f(\mathbf{x}+\mathbf{h}) - f(\mathbf{x}) \leq \langle \nabla f(\mathbf{x}), \mathbf{h} \rangle + \frac{\beta}{2}{\Vert \mathbf{h}\Vert}^2.
\end{equation}
\end{definition}

\begin{proof} [Proof of Proposition \ref{prop:regret}]
Assume that the function $f$ if $\alpha$-convex and $\beta$-smooth. Denote $\hat{\mathbf{x}}^{(t)}$ as the proposed data from running Algorithm \ref{alg:ps-bo-alg} up to iteration $t$ with incumbent-guided direction $\mathcal{L}$ following Lemma \ref{lemma:informed-direction}. Assume that $\hat{\mathbf{x}}^{(t)} \in \mathcal{L}$. Denote a local optimum of $f$ as $\mathbf{x}_*$. Denote the solution from \textit{exact} line search on incumbent-guided line as $\mathbf{x}_*^{(t+1)} = \argmin_{\mathbf{x} \in \mathcal{L}} f(\mathbf{x})$. Then, we follow \citeauthor{kirschner2019linebo} (Lemma 4) to yield the following result,
\begin{equation} \label{eq-appendix:2}
    \mathbb{E}\left[ f(\mathbf{x}_*^{(t+1)}) - f(\mathbf{x}_*) \right] \le \left(1 - \frac{\alpha C}{2\beta} \right)(f(\hat{\mathbf{x}}^{(t)}) - f(\mathbf{x}_*)),
\end{equation}
where the expectation is over the random $\mathbf{r}_1$ and $\mathbf{r}_2$. Denote the accuracy loss when finding the proposed solution $\hat{\mathbf{x}}^{(t+1)}$ instead of the exact one $\mathbf{x}_*^{(t+1)}$ as $\varepsilon$, and $\gamma = \frac{\alpha C}{2\beta}$, we compute the expected regret of solution $\hat{\mathbf{x}}^{(t+1)}$ as,
\begin{equation*}
\begin{split}
 \mathbb{E}\left[ f(\hat{\mathbf{x}}^{(t+1)}) - f(\mathbf{x}_*)\right] &\leq \mathbb{E}\left[ f(\mathbf{x}_*^{(t+1)}) - f(\mathbf{x}_*)\right] + \varepsilon. \\
\end{split}
\end{equation*}
Applying Eq. (\ref{eq-appendix:2}) yields,
\begin{equation*}
\begin{split}
 \mathbb{E}\left[ f(\hat{\mathbf{x}}^{(t+1)})  - f(\mathbf{x}_*)\right] &\leq (1-\gamma)(f(\hat{\mathbf{x}}^{(t)}) - f(\mathbf{x}_*)) + \varepsilon,
\end{split}
\end{equation*}
This result shows the relationship between solutions from \psbo in two consecutive iterations. Applying this result recursively up to iteration $T$, we yield the expected regret as follows,
\begin{equation*}
\begin{split}
 \mathbb{E}\left[ f(\hat{\mathbf{x}}^{(T)}) - f(\mathbf{x}_*)\right] &\leq \varepsilon\sum_{t=1}^{T}{(1-\gamma)^t} \\
 & + (1-\gamma)^T(f(\hat{\mathbf{x}}^{(0)})-f(\mathbf{x}_*)).\\
 & \leq \frac{\varepsilon}{\gamma} + (1-\gamma)^T(f(\hat{\mathbf{x}}^{(0)})-f(\mathbf{x}_*)).
\end{split}
\end{equation*}
Given the kernel is a one-dimensional kernel following Assumption \ref{assumption:gamma_t}, we follow \citeauthor{kirschner2019linebo} to set the accuracy $\varepsilon=(\frac{d\log{T}}{2T})^{(1-2\kappa)/2}$ and rewrite the expected simple regret as,
\begin{equation*}
 \mathbb{E}\left[ r_T\right] \leq \mathcal{O}\left( (d\log{T}/T)^{(1-2\kappa)/2} \right).
\end{equation*}
The simple regret is derived by assuming $f^*=f(\mathbf{x}_*)$, hence $r_T = \min{y^{(t)}}|_{i=1}^T - f^* \leq f(\hat{\mathbf{x}}^{(T)}) - f(\mathbf{x}_*)$. This concludes the proof.
\end{proof}


\subsection*{Outline for Proof for Theorem \ref{theorem:global}}

For the proof of Theorem \ref{theorem:global}, we refer readers to Theorem 1 of \citeauthor{papenmeier2022baxus}, \citeyear{papenmeier2022baxus}. The general idea is to compute the ratio between number of possible optima-preserving assignments - the number of possible ways to distribute $d_e$ effective dimensions into $d_\mathcal{A}$ target dimensions - to the total number of assignments - the number of possible ways to distribute $d_e$ effective dimensions into all $d$ input dimensions.

\subsection{Adaptive Line Selection} \label{sec-appendix:line-direction-selection}
We show in Alg. \ref{alg:line-select} the pseudo code for adaptively selecting the most optimal incumbent-guided line $\mathcal{L}_i$ among the set of $m$ lines $\mathbb{L}=\{\mathcal{L}(\hat{\mathbf{x}}_i,\mathbf{v}_i) \}|_{i=1}^m$. 

\begin{algorithm}[ht] 
   \caption{\texttt{LINE-SELECT}: Line Selection Algorithm}
   \label{alg:line-select}
\begin{algorithmic}[1]
   \State {\bfseries Input:} Current $\mathcal{GP}$, set of $m$ lines $\mathbb{L}=\{\mathcal{L}_i\}|_{i=1}^m$
   \State {\bfseries Output:} The most optimal line $\mathcal{L}_{i^*}$
   \State Sample a posterior $g \sim \mathcal{GP}$
   \For{$\mathcal{L}_i \in \mathbb{L}$}
     \State Sample candidate data points $\mathcal{X}_c$ on line $\mathcal{L}_i$
     \State Find best TS reward $r_i = \max_{{\mathbf{x}_c} \in \mathcal{X}_c}{g(\mathbf{x}_c)}$ 
     \State Append $r_i$ to $\mathcal{R}$
   \EndFor
   \State Find best line $i^* = \argmax_{r_i \in \mathcal{R}}{\mathcal{R}}$
   \State Return $\mathcal{L}_{i^*}$
\end{algorithmic}
\end{algorithm}

\subsection{Incumbent-guided Line-based Optimization} \label{sec-appendix:line-based-optimization}
We show in Alg. \ref{alg:line-based-opt} the pseudo code for the line-based optimization procedure tailored for the incumbent-guided direction lines.

\begin{algorithm}[ht] 
   \caption{\texttt{LINE-OPT}: Incumbent-guided Line Optimization}
   \label{alg:line-based-opt}
\begin{algorithmic}[1]
   \State {\bfseries Input:} The incumbent-guided direction line $\mathcal{L}(\mathbf{\hat{x}}, \mathbf{v})$.
   \State {\bfseries Output:} The next data point for observation.
   \State Sample candidate data points $\mathbf{x}_c \in \mathcal{L}(\mathbf{\hat{x}}, \mathbf{v})$
   \State Solve Eq. (\ref{eq:moo-acq}) for Pareto set $\mathcal{P}$ given initial $\mathbf{x}_c$ 
   \State Solve Eq. (\ref{eq:moo-acq-select}) for the next BO data points $\mathbf{x}^* \in \mathcal{P}$ 
   \State Return $\mathbf{x}^*$
\end{algorithmic}
\end{algorithm}

\subsection{Detailed Implementation of \psbo} \label{sec-appendix:psbo-settings}
For the incumbent-guided lines, we use a common configuration as with PSO, $m=20$, $w=0.729$, $c_1=c_2=2.05w$ \cite{shi1998wPSO, regis2014PSO_RBF, li2020PSO_FastSurrogate}. 

For the surrogate model, we follow BAxUS to implement the GP via \texttt{GpyTorch} \cite{gardner2018gpytorch} and \texttt{BoTorch} \cite{balandat2020botorch}. Specifically, we use a Mat\'{e}rn 5/2 kernel with the ARD length-scales in the interval $[0.005, 10.0]$ and signal variance in the interval $[0.05, 20.0]$. The Gaussian likelihood is modelled with standard homoskedastic noise in the interval $[0.0005, 0.2]$.


For the acquisition function, we use NSGA-II to solve the multi-objective optimization problems. We use the NSGA-II implementation from \texttt{pymoo} \cite{pymoo} with default settings: population size of 100, tournament selection, simulated binary crossover and polynomial mutation. We terminate NSGA-II runs after 100 generations reached.

For the subspace embedding, we follow recommended settings in BAxUS, including the bin size $b=3$, budget until input dimension $1000$. To control the budget for each target dimension $T_{d_\mathcal{A}}$, we also employ the success and failure counter as in BAxUS. Note that because we do not employ TR, we employ a termination factor $K$ as the threshold for terminating the search in target space $\mathcal{A}$. Specifically, if a better solution is found in an iteration, we increase the success counter $c_s$, while increasing the failure counter $c_f$ otherwise (we also reset the other counter to zero). If $c_f$ exceeds the failure threshold $\tau_\text{fail}$, which is computed from $d_\mathcal{A}$, we increase $K$ by 1. If $c_s$ exceeds the success threshold $\tau_\text{succ} = 3$, we decrease $K$ by 1. $K$ is maintained such that $K\ge K_{\min}$, and if $K$ exceeds a maximum threshold $K_{\max}$, we terminate the search in target space $\mathcal{A}$. The termination factor $K$ can be considered as the counterpart for the TR side length in TurBO \cite{Eriksson2019TuRBO} and BAxUS \cite{papenmeier2022baxus}. In the experiments, we use an equivalent bound for $K$, such that $K_\text{init}=1$, $K_{\min} = 0$ and $K_{\max}=7$.

We implement \psbo in Python (version 3.10). We use Miniconda (version 23.3.1) to install Python packages. We provide in the code supplementary a \texttt{.yml} file to install the required packages for running our proposed methods.

\subsection{Implementations of Baselines} \label{sec-appendix:baselines-settings}
We run all baselines in Python using their open-sourced code. When evaluating the benchmark functions, we use the same implementation for fair comparison.

\paragraph{LineBO \cite{kirschner2019linebo}.} We compare with \textbf{CoordinateLineBO} version, which is overall the best compared to RandomLineBO and DescentLineBO. We keep all of the default settings from the paper and the code implementation, including RBF kernel and UCB acquisition function. We use the open-sourced implementation at \url{https://github.com/kirschnj/LineBO}.

\paragraph{BAxUS \cite{papenmeier2022baxus}.} We use the same settings as the embedding management with our proposed methods, including bin size $b=3$, budget until input dimension is equal to the maximum evaluation. The budget management is kept by default. The trust region management is kept by default, which is similar to TuRBO (see TuRBO settings below). We use the open-sourced implementation at \url{https://github.com/LeoIV/BAxUS}.

\paragraph{HESBO \cite{Nayebi2019hesbo}.} We compare with HESBO given two different settings of target dimension $d_\mathcal{A}=10$ and $d_\mathcal{A}=20$. We use the open-sourced implementation at \url{https://github.com/aminnayebi/HesBO}.

\paragraph{ALEBO \cite{Letham2020Alebo}.} We compare with ALEBO given two different settings of target dimension $d_\mathcal{A}=10$ and $d_\mathcal{A}=20$. We use the implementation from \texttt{ax-platform} at \url{https://github.com/martinjankowiak/saasbo}.

\paragraph{SAASBO \cite{eriksson21saasbo}.} We use default settings from the paper and implementation. We use the open-sourced implementation at \url{https://github.com/uber-research/TuRBO}.

\paragraph{TuRBO \cite{Eriksson2019TuRBO}.} We use default settings from the paper and implementation, including minimum and maximum trust region size of $\{2^{-7}; 1.6\}$, trust region split ratio of $2$. We use the open-sourced implementation at \url{https://github.com/uber-research/TuRBO}.

\paragraph{CMA-BO \cite{ngo2024cmabo}.} We use default settings from the paper and code implementation, including population size $\lambda$, initial variance $\sigma_0=0.3(u-l)$ and BO optimizer. We use the open-sourced implementation at \url{https://github.com/LamNgo1/cma-meta-algorithm}.

\paragraph{RDUCB \cite{ziomek2023rducb}.} We use the default settings from the paper and code implementation. We use the open-sourced implementation at \url{https://github.com/huawei-noah/HEBO/tree/master/RDUCB}.

\paragraph{CMA-ES \cite{Hansen2001CMAES}.} We use default settings from the paper and code implementation, including initial variance $\sigma_0=0.3(u-l)$. We use the implementation from \texttt{pycma} package at \url{https://github.com/CMA-ES/pycma}.

\paragraph{PSO \cite{kennedy1995pso}.} We use the swarm settings similar to our proposed method, including population size $m=20$, $w=0.729$, $c_1=c_2=2.05w$. For topology setting, we use star topology, which is similar to the mechanism used in \psbo. We use the implementation from \texttt{pyswarms} package at \url{https://github.com/ljvmiranda921/pyswarms}.

\subsection{Details of Benchmark Problems} \label{sec-appendix:benchmark_problems}

\paragraph{Synthetic Problems.} The three synthetic benchmark problems - \textit{Ackley-100D}, \textit{Branin-500D} and \textit{Hartmann-500D} - are derived from common test problems widely used in optimization\footnote{\url{https://www.sfu.ca/~ssurjano/index.html}}. The \textit{Ackley-100D} problem is based on the Ackley function, with 100 variables affecting the function values. We consider the standard hypercube domain $[-32.768, 32.768]^{100}$. The \textit{Branin-500D} and \textit{Hartmann-500D} problems are created by augmenting Branin-2D and Hartmann-6D problems with dummy variables that do not affect function values. The \textit{Hartmann-500D} is evaluated on a hypercube $[0,1]^{500}$, while \textit{Branin-500D} is evaluated on a domain consisting of $[-5,10]$ and $[0, 15]$ for the 2 effective dimensions, and $[0,1]$ for the remaining 498 dummy dimensions. These synthetic functions have been widely-used in BO research \cite{wang2016rembo, Eriksson2019TuRBO, Letham2020Alebo, eriksson21saasbo, papenmeier2022baxus, ngo2024cmabo, ziomek2023rducb, Letham2020Alebo}.

\paragraph{LassoDNA-180D.} This problem is a hyper-parameter optimization problem for Weighted LASSO (Least Absolute Shrinkage and Selection Operator) regression. The goal is to tune a set of hyper-parameters to balance the least-square estimation and the penalty term for sparsity. Specifically, \textit{LassoDNA-180D} is a 180D hyper-parameter optimization problem that uses a DNA dataset from microbiology problem. We use the implementation from \texttt{LassoBench} package \cite{vsehic2022lassobench}. This package has also been used in \cite{papenmeier2022baxus,ziomek2023rducb,ngo2024cmabo}.

\paragraph{HalfCheetah-102D.} This problem is a reinforcement learning (RL) problem from the Mujoco locomotion tasks implemented in the \texttt{gym} package \cite{brockman2016openaigym}. The goal is to maximize the cumulative reward chosen by a linear policy, described by a high-dimensional matrix. Specifically, \textit{HalfCheetah-102D} is is a 102D reinforcement learning problem that is created from the Half-Cheetah-v4 environment from Mujoco tasks. We use the implementation from \citeauthor{song2022mctsvs}, \citeyear{song2022mctsvs}. These RL tasks have also been used in many BO works \cite{Wang2020LAMCTS, papenmeier2022baxus, ngo2024cmabo}.

\paragraph{Mopta-124D.} This vehicle design problem involves searching for a set of 124 variables to minimize the vehicle's weight. We follow \citeauthor{eriksson21saasbo}, \citeyear{eriksson21saasbo} to relax the 68 performance constraints into soft constraints, thereby, minimizing a scalar goal. We use \textit{Mopta-124D} implementation from \cite{eriksson21saasbo}. This problem has also been used in \cite{eriksson2021scbo, papenmeier2022baxus}.

\subsection{Computing Infrastructure}
We run experiments on a computing server with a Dual CPU of type AMD EPYC 7662 (total of 128 Threads, 256 CPUs). Each experiments are allocated 8 CPUs and 64GB Memory. 
The server is installed with Operating System Ubuntu 20.04.3 LTS.

\subsection{Runtime}
Table \ref{table:runtimes} reports the average runtime (in seconds) per iteration of \psbo and other baselines used in the main papers. 

\begin{table*}[htbp]
\centering
\begin{tabular}{|l|cccccc|}
\hline
\makecell[l]{Average runtime\\per iteration (s)} & \makecell[r]{Ackley\\-100D} & \makecell[r]{Branin\\-500D} & \makecell[c]{HalfCheetah\\-102D} & \makecell[c]{Mopta\\-124D} & \makecell[c]{LassoDNA\\-180D} & \makecell[c]{Hartmann\\-500D} \\
\hline
BOIDS & 11.90 & 23.10 & 12.41 & 12.88 & 15.16 & 23.04 \\
\hline
BO & 3.56 & 1.73 & 3.85 & 3.93 & 3.89 & 4.31 \\
\hline
BAxUS & 6.97 & 12.80 & 12.26 & 8.56 & 10.53 & 12.82 \\
\hline
LineBO & 0.04 & 0.06 & 0.48 & 0.34 & 0.73 & 0.06 \\
\hline
TuRBO & 3.88 & 2.97 & 3.57 & 3.92 & 3.99 & 4.42 \\
\hline
CMA-BO & 4.19 & 6.73 & 4.44 & 4.39 & 4.89 & 7.70 \\
\hline
SAASBO & 46.16 & 339.42 & 51.31 & 63.77 & 92.37 & 310.97 \\
\hline
HESBO ($d_e=10$) & 0.42 & 0.48 & 0.67 & 0.56 & 0.69 & 0.46 \\
\hline
HESBO ($d_e=20$) & 0.47 & 0.48 & 0.68 & 0.58 & 0.72 & 0.46 \\
\hline
ALEBO ($d_e=10$) & 36.60 & 65.44 & 74.33 & 70.43 & 89.80 & 71.24 \\
\hline
ALEBO ($d_e=20$) & 149.06 & 189.64 & 278.27 & 133.86 & 268.93 & 202.17 \\
\hline
RDUCB & 111.51 & 227.42 & 86.18 & 110.21 & 33.57 & 155.70 \\
\hline
CMA-ES & 3.4E-4 & 1.0E-3 & 0.06 & 0.05 & 0.08 & 8.7E-4 \\
\hline
PSO & 0.6E-4 & 0.8E-4 & 3.5E-3 & 0.01 & 3.0E-4 & 0.9E-4 \\
\hline
\end{tabular}
\caption{Runtime (in seconds) of \psbo and baselines. \psbo has higher runtime compared to BO, LineBO and BAxUS, yet much more feasible compared to ALEBO, SAASBO and RDUCB.}
\label{table:runtimes}
\end{table*}

\end{document}